\documentclass[letterpaper, 10 pt, conference]{ieeeconf}

\IEEEoverridecommandlockouts
\newif\ificraanonymous
\icraanonymousfalse
\ificraanonymous
  \title{%
AC-DC: Adaptive Communication for Scalable Dynamic Average Consensus in Multi-Robot Ergodic Search
}

\author{Anonymous Authors}

\else
  \makeatletter
\newcommand{\IEEEauthorrefmark}[1]{\textsuperscript{\scriptsize #1}}
\makeatother

\title{

AC-DC: Adaptive Communication for Scalable Dynamic Average Consensus in Multi-Robot Ergodic Search

}

\author{Robin Inho Kee\IEEEauthorrefmark{1,4}, Begum Cannataro\IEEEauthorrefmark{2}, Vasileios Tzoumas\IEEEauthorrefmark{3}%
\thanks{\IEEEauthorrefmark{1}Robin Inho Kee is with the Department of Robotics, University of Michigan, Ann Arbor, MI, USA. {\tt\small inhokee@umich.edu}}%
\thanks{\IEEEauthorrefmark{2}Begum Cannataro is with the Charles Stark Draper Laboratory, Cambridge, MA, USA. {\tt\small bcannataro@draper.com}}%
\thanks{\IEEEauthorrefmark{3}Vasileios Tzoumas is with the Department of Aerospace Engineering, University of Michigan, Ann Arbor, MI, USA. {\tt\small vtzoumas@umich.edu}}%
\thanks{\IEEEauthorrefmark{4}Robin Inho Kee is a Draper Scholar with the Charles Stark Draper Laboratory, Cambridge, MA, USA. The authors thank the Draper Scholars program for supporting this work.}%
}

\fi

\usepackage[T1]{fontenc}
\usepackage{cite}
\usepackage{amsmath,amssymb,mathtools,bm}

\usepackage{amsthm}

\usepackage{graphicx}
\usepackage{booktabs,multirow,array,tabularx}
\usepackage{float}
\usepackage{placeins}
\usepackage{algorithm}
\usepackage[noend]{algpseudocode}
\makeatletter
\renewcommand{\theALG@line}{\thealgorithm.\arabic{ALG@line}}
\providecommand*{\theHALG@line}{}
\renewcommand*{\theHALG@line}{\thealgorithm.\arabic{ALG@line}}
\makeatother
\usepackage{siunitx}
\usepackage{xcolor}
\usepackage{xspace}
\usepackage{microtype}
\let\labelindent\relax
\usepackage{enumitem}
\usepackage{tikz}
\usetikzlibrary{arrows.meta,positioning,fit,calc}
\usepackage{url}

\makeatletter
\let\NAT@parse\undefined
\makeatother
\usepackage[bookmarks=false]{hyperref}
\usepackage{cleveref}

\hypersetup{%
  colorlinks=true,
  linkcolor=blue,
  filecolor=magenta,
  urlcolor=black,
  citecolor=red,
  linkbordercolor={0 0 1}
}

\usepackage{etoolbox}
\makeatletter
\patchcmd{\@makecaption}
  {\begin{center}{\footnotesize #1}\\{\footnotesize\scshape #2}\end{center}}
  {\parbox{\hsize}{\footnotesize\noindent #1:\ #2\par}}
  {}
  {\PackageWarning{table-caption}{Could not patch table caption format}}
\makeatother

\crefname{algorithm}{Algorithm}{Algorithms}
\Crefname{algorithm}{Algorithm}{Algorithms}

\graphicspath{{./figures/}}
\allowdisplaybreaks
\setlist[itemize]{leftmargin=*,topsep=2pt,itemsep=1pt,parsep=0pt}
\setlist[enumerate]{leftmargin=*,topsep=2pt,itemsep=1pt,parsep=0pt}

\newtheorem{problem}{Problem}

\newtheorem{proposition}{Proposition}

\newtheorem{remark}{Remark}

\newcommand{\bdmath}{\begin{dmath}}
\newcommand{\edmath}{\end{dmath}}
\newcommand{\beq}{\begin{equation}}
\newcommand{\eeq}{\end{equation}}
\newcommand{\bdm}{\begin{displaymath}}
\newcommand{\edm}{\end{displaymath}}
\newcommand{\bea}{\begin{eqnarray}}
\newcommand{\eea}{\end{eqnarray}}
\newcommand{\beal}{\beq \begin{array}{lll}}
\newcommand{\eeal}{\end{array} \eeq}
\newcommand{\beas}{\begin{eqnarray*}}
\newcommand{\eeas}{\end{eqnarray*}}
\newcommand{\ba}{\begin{array}}
\newcommand{\ea}{\end{array}}
\newcommand{\bit}{\begin{itemize}}
\newcommand{\eit}{\end{itemize}}
\newcommand{\ben}{\begin{enumerate}}
\newcommand{\een}{\end{enumerate}}

\DeclareMathOperator*{\argmax}{arg\,max}

\DeclareRobustCommand{\scenario}[1]{%
  \ifmmode
    \mathsf{#1}%
  \else
    \texorpdfstring{{\fontsize{8.9}{9}\selectfont\sffamily #1}\xspace}{#1}%
  \fi
}

\newcommand{\red}[1]{{\color{red}#1}}
\newcommand{\blue}[1]{{\color{blue}#1}}

\newcommand{\myprob}{Problem~\ref{prob:team_tracking}\xspace}

\newif\ifshowcomments
\showcommentsfalse

\ifshowcomments
\else
  \renewcommand{\red}[1]{}
  \renewcommand{\blue}[1]{}
\fi

\makeatletter
\def\@IEEEfigurecaptionsepspace{\vspace{-5.00pt}}

\def\section{\@startsection{section}{1}{\z@}%
  {0.35ex plus 0.15ex minus 0.1ex}%
  {0.18ex plus 0.08ex minus 0.08ex}%
  {\normalfont\normalsize\centering\scshape}}
\def\subsection{\@startsection{subsection}{2}{\z@}%
  {0.35ex plus 0.15ex minus 0.1ex}%
  {0.18ex plus 0.08ex minus 0.08ex}%
  {\normalfont\normalsize\itshape}}

\def\thm@space@setup{\thm@preskip=1.5pt plus 0.6pt minus 0.4pt%
  \thm@postskip=1.5pt plus 0.6pt minus 0.4pt}

\patchcmd{\thebibliography}{\itemsep 0pt plus .5pt\relax}%
  {\itemsep -0.90pt plus .15pt\relax}{}%
  {\PackageError{layout-spacing}{Bibliography spacing patch failed}{}}
\patchcmd{\thebibliography}%
  {\vskip 0.3\baselineskip plus 0.1\baselineskip minus 0.1\baselineskip}%
  {\vskip 0.5pt}{}%
  {\PackageError{layout-spacing}{Bibliography heading spacing patch failed}{}}
\makeatother

\AtBeginEnvironment{proof}{%
  \pretocmd{\trivlist}{\topsep1.5pt plus 0.4pt minus 0.4pt\relax}{}%
    {\PackageError{layout-spacing}{Proof spacing hook failed}{}}}

\begin{document}
\maketitle
\thispagestyle{empty}
\pagestyle{empty}

\begin{abstract}
We study scalable peer-to-peer dynamic average consensus (DC) for multi-robot systems under finite-range, finite-rate, and interference-constrained communication. We introduce Adaptive Communication for Dynamic Average Consensus (AC-DC), which jointly adapts \emph{Who} communicates with whom, \emph{When}, and over \emph{What} parts of the consensus state, using local inputs and successfully received neighbor information.
Each robot's consensus state estimates the current average of the robots' local inputs. AC-DC updates these estimates as local inputs change and averages the values exchanged between robot pairs.
In AC-DC, robot pairs update without waiting for every robot to complete a communication round, and the selected-state messages carry consensus state coordinates independent of team size for a fixed state representation.
We apply AC-DC to dynamic-priority multi-robot ergodic search: one consensus stream estimates team visitation for motion coordination, while the other fuses regional measurement information to update uncertainty maps and search targets. Across twelve settings with up to 80 robots and 20 paired trials per setting, AC-DC has the lowest mean (i)~normalized covariance-trace area under the curve (AUC) and (ii)~attempted modeled communication payload among the compared decentralized methods. Averaged across settings, AC-DC achieves paired AUC reductions of $27.5\%$ relative to state-of-the-art baselines, with $8.7\times$ less communication traffic. As the number of robots increases, we observe that AC-DC's communication payload approaches that of the ideal centralized baseline (one ground compute-station communicating directly with all robots): with $120$ robots in a fixed $600\times600$~m scaling test, AC-DC uses $19.3$~MB versus $19.2$~MB for the ideal centralized baseline, while remaining peer-to-peer.
\end{abstract}

\section{Introduction}\label{sec:introduction}

Future multi-robot teams will perform information-gathering tasks such as mapping, search, inspection, and active information acquisition~\cite{xu2025communication,atanasov2015decentralized,corah2018distributed}. In large or unknown environments, communication infrastructure may be unavailable or unreliable. The DARPA Subterranean Challenge, for example, combined large unknown environments with degraded communication, while systems such as ACHORD prioritized mission information under intermittent connectivity~\cite{darpa2021subt,saboia2022achord,kottege2024heterogeneous}. Such settings require robots to keep evolving team information current while they move and sense.

Ideally, these teams coordinate peer-to-peer, avoiding dependence on a single station that may be unreachable, fail, or be attacked. Yet peer-to-peer coordination can become communication-heavy as teams grow: repeated rounds and large state exchanges consume airtime and delay decisions. We study this problem through dynamic average consensus (DC), where each robot's consensus state estimates the current average of the robots' time-varying local inputs~\cite{kia2015dynamic,zhu2010discrete,kia2019tutorial}. We seek a DC policy that adapts communication using received neighbor information, with selected-state message dimension independent of team size for a fixed state representation.

This motivates our central question: under finite-range, finite-rate, and interference-constrained communication, how can each robot jointly decide \emph{Who} communicates with whom, \emph{When}, and over \emph{What} parts of the consensus state to efficiently and effectively track the time-varying team averages?
 
The wireless medium couples the three decisions: finite data rate makes message size determine serialization time~\cite{kurose2020computer}, while finite range and receiver-side interference restrict concurrent exchanges~\cite{wang2006interference_tdma,gandham2008link_scheduling}. Failed or delayed deliveries leave stale neighbor information, coupling state-block selection, communication triggering, and neighbor selection.

The gap is to adapt these three decisions jointly using received neighbor information over time-varying graphs under the radio constraints above.
DC methods track moving averages under specified communication laws~\cite{kia2015dynamic,zhu2010discrete,kia2019tutorial,bastianello2022admm_dac_imperfect}.

For \emph{Who}, gossip selects edges randomly~\cite{boyd2006randomized} or by state disagreement~\cite{ustebay2010greedy}; state-dependent selection also appears in distributed convex optimization~\cite{verma2023maximal}. In AC-DC, we rank neighbors for the selected block using stored neighbor blocks and their change summaries, allowing us to assess exchanges before receiving current block values.

For \emph{When}, triggered communication adapts transmission timing in dynamic average consensus~\cite{kia2015distributed} and decentralized online convex optimization~\cite{cao2020decentralized}.
We trigger requests using the selected block's exchange score, which balances predicted disagreement reduction against block size to limit exchanges with little expected benefit.
If no neighbor meets the score threshold, we use service age to request overdue refreshes.

For \emph{What}, block-gossip methods group graph edges or subgraphs for averaging~\cite{haddock2022paving}, while dynamic compression changes the message representation in average consensus~\cite{makridis2025average}.
We select the \emph{state block}---a subset of entries in a robot's consensus vector---that we expect to be most useful to exchange, based on local input changes and disagreement with stored neighbor blocks. In selected-block mode, we send only its entries to reduce the data per exchange.
Related multi-robot methods improve efficiency and effectiveness using local mesh neighborhoods and adapting them online~\cite{xu2025communication,xu2026meshnet}. These works address distributed submodular optimization over discrete decision domains, whereas AC-DC addresses DC over continuous-valued signals.

To address this gap, we provide Adaptive Communication for Dynamic Average Consensus (AC-DC). 
AC-DC partitions each consensus state into \emph{state blocks} and stores the last block successfully received from each recently heard neighbor. \emph{Drift summaries} describe changes in the neighbor's current block relative to the last shared block values, without retransmitting the block. The drift summaries and stored blocks support a \emph{candidate score} that estimates the value of exchanging a particular state block with that neighbor.
We combine local input changes and stored neighbor information into a \emph{block candidate score} to decide \emph{What} to share. We request the block \emph{When} a neighbor meets the candidate-score threshold or, if none qualifies, is overdue for refresh. Then, the highest-scoring qualified neighbor determines \emph{Who} to contact.

We initialize each consensus state at its local input, add local input changes, and average values exchanged by communicating pairs while preserving their sum~\cite{zhu2010discrete,kia2019tutorial,boyd2006randomized}. These updates keep the average consensus state equal to the current average input. To preserve the pair sum, both robots update only after both directions of the exchange succeed.

We instantiate AC-DC in dynamic-priority multi-robot ergodic search. 
Ergodic control seeks to distribute team visitation in proportion to a \emph{spatial target}, a probability distribution---over the search domain--- that represents the localization uncertainty of points of interest~\cite{mathew2011metrics,mavrommati2018realtime}, e.g., detected landmarks. This distribution allocates more search effort to regions with greater remaining uncertainty and mission priority. 
Our formulation uses two DC streams: a \emph{trajectory stream} estimates the team's past and planned visitation, enabling each robot to coordinate its motion using team coverage information~\cite{abraham2018decentralized};
a \emph{sensing-information stream} fuses regional measurement-information contributions to update each robot's uncertainty map, which we combine with mission priority to construct its spatial target. All robots receive the mission-priority command directly.

\textbf{Contributions.} This paper makes three contributions.
\begin{enumerate}[label=\textbf{\arabic*)},leftmargin=5.4mm,itemsep=2pt,topsep=2pt]
    \item \textbf{Adaptive communication for DC.} 
    We introduce a communication-efficient robot-local policy that jointly adapts \emph{Who} communicates with whom, \emph{When}, and over \emph{What} parts of the consensus state, using local inputs, received neighbor blocks, and their drift summaries.
    \item \textbf{Scalable communication structure and complexity.} 
    We prioritize changing or disagreeing blocks for transmission in selected-block mode, using roughly $1/10$ to $1/42$ as many state entries as a full stream in our experiments.
    We bound per-frame communication, including initialization and metadata. We compare with two approaches to constrained communication for DC:
    alternating direction method of multipliers for dynamic average consensus (ADMM-DAC), which works asynchronously by sending neighbor-specific full-state messages~\cite{bastianello2022admm_dac_imperfect},
    and push-pull average consensus with dynamic compression (PP-ACDC), which sends compressed messages~\cite{makridis2025average}.
    We evaluate both on the same time-varying inputs.
    \item \textbf{Two-stream decentralized ergodic search.} 
    We use AC-DC in our search formulation to estimate team trajectory statistics for motion coordination and fuse regional measurement information for uncertainty-map updates, from which each robot forms its search target.
\end{enumerate}

\textbf{Evaluations.} 
Across twelve settings with up to 80 robots and 20 paired trials per setting, AC-DC has the lowest mean normalized covariance-trace AUC and attempted modeled communication payload among the compared decentralized methods. The mean paired AUC reductions are $29.5\%$ relative to ADMM-DAC and $24.5\%$ relative to PP-ACDC; these baselines use $5.8\times$ and $11.5\times$ more payload than AC-DC. In fixed-area scalability tests, the modeled-payload gap to the ideal centralized baseline narrows to $0.4\%$ at $N=120$, where AC-DC used $19.3$~MB versus $19.2$~MB.

\begin{figure}[t]
\centering
\resizebox{\columnwidth}{!}{%
\begin{tikzpicture}[
    x=1cm,y=1cm,
    >=Latex,
    font=\scriptsize,
    box/.style={draw=black,text=black,rounded corners,align=center,inner sep=2pt,
        text width=24mm,minimum height=10mm},
    arrow/.style={->,draw=black,line width=0.45pt}
]
\node[box] (traj) at (0,1.2)
{local trajectory\\$\rightarrow c_i$};
\node[box] (sense) at (0,0)
{sensing information\\$\rightarrow J_i$};
\node[box] (nbr) at (0,-1.2)
{one-hop messages\\blocks + drift summaries};

\node[box,text width=34mm,minimum height=26mm] (acdc) at (3.4,0)
{\textbf{AC-DC}\\
\emph{Who} communicates with whom?\\
\emph{When} do they communicate?\\
\emph{What} parts of the consensus state?};

\node[box] (priority) at (6.8,1.2)
{shared mission priority\\$\rho_P$};
\node[box] (target) at (6.8,0)
{target coefficients\\$\Phi_t(\hat J_i)$};
\node[box] (ctrl) at (6.8,-1.2)
{ergodic controller\\$\rightarrow u_i$};

\draw[arrow] (traj.east) -- ([yshift=9mm]acdc.west);
\draw[arrow] (sense.east) -- (acdc.west);
\draw[arrow] (nbr.east) -- ([yshift=-9mm]acdc.west);
\draw[arrow] (acdc.east) -- node[above,text=black] {$\hat J_i$} (target.west);
\draw[arrow] (priority.south) -- (target.north);
\draw[arrow] (target.south) -- (ctrl.north);
\draw[arrow] ([yshift=-9mm]acdc.east) -- node[below,text=black] {$\hat c_i$} (ctrl.west);
\end{tikzpicture}}%
\caption{AC-DC communication layer for multi-robot ergodic search. We use local inputs, stored neighbor blocks, and drift summaries to decide \emph{Who} communicates with whom, \emph{When}, and over \emph{What} parts of the consensus state. The trajectory estimate $\hat c_i$ enters the controller directly; the sensing-information estimate $\hat J_i$ and mission priority give target coefficients $\Phi_t(\hat J_i)$.}
\label{fig:system_pipeline}
\end{figure}
\section{Problem Formulation}\label{sec:problem}

We consider $N$ robots performing decentralized ergodic information gathering using finite-range, finite-rate, half-duplex wireless links with receiver-side interference.

\noindent\textbf{Robot Motion.}\label{subsec:robots_sensing}
Let $\mathcal V\triangleq\{1,\ldots,N\}$ index robots in the bounded search domain $\mathcal D\subset\mathbb R^2$. Robot $i$ has state $x_i(t)$, position $p_i(t)\in\mathcal D$, and control input $u_i(t)$. We use control-affine dynamics as in receding-horizon ergodic control~\cite{mavrommati2018realtime}:
\begin{equation}
\dot x_i(t)=f_i\!\left(x_i(t)\right)+g_i\!\left(x_i(t)\right)u_i(t).
\label{eq:robot_dynamics}
\end{equation}
Control inputs are updated every $\Delta t>0$.

\noindent\textbf{Sensing.}
We use a linear-Gaussian sensing model~\cite{atanasov2015decentralized}. Let $\mathcal Q\subset\mathcal D$ be the set of map-cell center locations, with common cell area $\Delta A$, and let $\theta$ denote the map state. Each $r\in\mathcal Q$ indexes the map cell centered at location $r$. A measurement acquired by robot $i$ at position $p_i(t_\nu)$ is
\begin{equation}
z_{i,\nu}=H_i\!\left(p_i(t_\nu)\right)\theta+v_{i,\nu},
\;
v_{i,\nu}\sim\mathcal N\!\left(0,R_i\!\left(p_i(t_\nu)\right)\right).
\label{eq:sensing_information_model}
\end{equation}
Here $\mathcal N(0,R)$ denotes a zero-mean Gaussian distribution with covariance $R$, $R_i(p)\succ0$ is the measurement-noise covariance, and the measurement noises are independent across robots and sampling times. The measurement-information matrix is $M_i(p)\triangleq H_i(p)^\top R_i(p)^{-1}H_i(p)$. If all measurements were pooled, independent information adds to the prior as~\cite{atanasov2015decentralized}
\begin{equation}
\begin{aligned}
\Omega(t)
&=\Omega_0+\sum_{i=1}^{N}\sum_{\nu=1}^{m_i(t)}M_i\!\left(p_i(t_\nu)\right),
\Sigma(t)=\Omega(t)^{-1},
\end{aligned}
\label{eq:local_information}
\end{equation}
where $\Sigma_0\succ0$, $\Omega_0=\Sigma_0^{-1}$, and $m_i(t)$ is the number of measurements acquired by robot $i$ from mission start through time $t$. Under our independent-cell specialization of this additive information update, robot $i$'s accumulated cell-wise measurement information is
$I_i(t,r)\triangleq\sum_{\nu=1}^{m_i(t)}[M_i(p_i(t_\nu))]_{rr}$. The posterior covariance therefore depends on sensing geometry and noise statistics through the accumulated information.

\noindent\textbf{Communication Model.}\label{subsec:communication_model}
Communication-range adjacency is represented by the undirected graph $\mathcal G(t)=(\mathcal V,\mathcal E(t))$:
\begin{equation}
\{i,j\}\in\mathcal E(t)
\quad\Longleftrightarrow\quad
\|p_i(t)-p_j(t)\|_2\le R_c,
\label{eq:range_graph}
\end{equation}
where $R_c>0$ is the communication radius; $\{i,j\}$ denotes mutual range adjacency and $(i,j)$ denotes a directed transmission. Robots infer local neighbor availability from successfully delivered messages rather than observing $\mathcal G(t)$ globally.

The shared radio is half-duplex, and nearby simultaneous transmitters may interfere at a receiver. Directed transmissions $(i,j)$ and $(i',j')$ conflict if
\begin{equation}
\begin{aligned}
&\{i,j\}\cap\{i',j'\}\ne\emptyset,\quad\qquad\text{or}\\[-1mm]
&\|p_{i'}(t)-p_j(t)\|_2\le R_I,\!\!\!\quad\quad\text{or}\\[-1mm]
&\|p_i(t)-p_{j'}(t)\|_2\le R_I,
\end{aligned}
\label{eq:link_conflict}
\end{equation}
where $R_I>0$ is the interference radius~\cite{wang2006interference_tdma,gandham2008link_scheduling}; the first condition enforces half-duplex operation and the latter two cross-link interference.

An $L$-byte packet transmitted at rate $R_{\mathrm{bit}}$ occupies the channel for time equal to~\cite{kurose2020computer}
\begin{equation}
\tau(L)=\frac{8L}{R_{\mathrm{bit}}}.
\label{eq:packet_duration}
\end{equation}

\noindent\textbf{Regional Sensing-Information State.}\label{subsec:map_information}
To obtain a fixed-dimensional communication state, we use a finite-dimensional spatial-field representation as in distributed sensing and coverage~\cite{schwager2009decentralized,elwin2020distributed}. Viewing $I_i(t,\cdot)$ as piecewise constant over map cells, let $\psi_\ell(r)=\mathbf 1_{\mathcal Q_\ell}(r)$ denote the indicators of a fixed partition $\{\mathcal Q_\ell\}_{\ell=1}^{K_J}$ of the map cells into nonoverlapping regions. Under a common sensing schedule where $m_i(t)=m(t)$, for $m(t)\ge1$ define
\begin{equation}
[J_i(t)]_\ell
=\frac{1}{m(t)}\int_{\mathcal D} I_i(t,r)\psi_\ell(r)\,dr,
\; \ell=1,\ldots,K_J.
\label{eq:regional_information}
\end{equation}
For $m(t)=0$, set $J_i(t)=0$. Each entry of $J_i(t)$ summarizes how much sensing information robot $i$ has accumulated in one spatial region; the fixed-dimensional vector lets robots communicate where information has been gathered without transmitting the full cell-wise field. Since $m(t)$ is common to all robots, the normalization removes only the shared accumulation scale; with $\bar J(t)\triangleq N^{-1}\sum_iJ_i(t)$, $Nm(t)\bar J(t)$ recovers the pooled regional information masses.

\noindent\textbf{Ergodic Search Interface.}\label{subsec:ergodic_consensus}
The sensing posterior indicates where uncertainty remains, while the shared mission-priority map specifies where reducing it matters most. Together, they define the spatial target distribution used for information gathering. Ergodic control then drives the team's trajectory visitation toward this target so that measurements are collected in regions where more information is needed~\cite{miller2016ergodic}. 

Ergodic control represents this target and the team trajectory in Fourier space~\cite{mathew2011metrics,mavrommati2018realtime}. Let $\mathcal K\subset\mathbb N_0^2$ index the retained Fourier modes, with $K_c\triangleq|\mathcal K|$ and spectral weights $\Lambda_\kappa$. Let $c_i(t)\in\mathbb R^{K_c}$ denote the Fourier coefficients of robot $i$'s trajectory-visitation statistics~\cite{mathew2011metrics,mavrommati2018realtime}, and let $\bar c(t)\triangleq N^{-1}\sum_i c_i(t)$ be their team average.

Let $\rho_P(t,r)\ge0$ be a mission-priority map that may change by mission phase. Given a regional sensing-information state $J$, the decoded information-form posterior and $\rho_P(t,\cdot)$ define the normalized spatial target $\rho_t(J,\cdot)$.

For the normalized Fourier basis $F_\kappa$, define the target-coefficient vector $\Phi_t(J)\in\mathbb R^{K_c}$ by
\begin{equation}
[\Phi_t(J)]_\kappa
=\int_{\mathcal D} F_\kappa(r)\rho_t(J,r)\,dr,
\qquad \kappa\in\mathcal K.
\label{eq:effective_target_coefficients}
\end{equation}
The ergodic objective penalizes the weighted mismatch between team trajectory and current target coefficients:
\begin{equation}
\mathcal E_{\mathrm{erg}}(t)
\triangleq
\sum_{\kappa\in\mathcal K}\Lambda_\kappa
\left(\bar c_\kappa(t)-[\Phi_t(\bar J(t))]_\kappa\right)^2.
\label{eq:ergodic_mismatch}
\end{equation}
The fixed receding-horizon controller in~\cite{mavrommati2018realtime} reduces this mismatch subject to the robot dynamics; particularly, with globally available team quantities, its interface is
\begin{equation}
u_i(t)=\pi_{\mathrm{erg}}\!\left(x_i(t);\bar c(t),\Phi_t\!\left(\bar J(t)\right)\right),
\qquad i\in\mathcal V.
\label{eq:ergodic_controller_interface}
\end{equation}

In decentralized execution, neither team average is globally available: because $c_i(t)$ and $J_i(t)$ evolve with motion and sensing, robots must track $\bar c(t)$ and $\bar J(t)$ as two dynamic average consensus (DC) tasks~\cite{zhu2010discrete,kia2019tutorial}.

\begin{problem}[Communication-constrained decentralized ergodic search] \label{prob:team_tracking}
Under the radio models and fixed controller above, design a communication policy for each robot to track the moving team averages $\bar c(t)$ and $\bar J(t)$ using only its local inputs and successfully delivered one-hop information.
The policy jointly decides Who communicates with whom, When, and over What parts of the consensus state.
\end{problem}

\begin{remark}[Generality]
The DC backbone in \Cref{subsec:dynamic_input_tracking} applies to any vector-valued local input whose moving team average must be tracked; here, $c_i$ and $J_i$ instantiate the trajectory and sensing-information streams.
\end{remark}

\section{AC-DC: Adaptive Communication for Dynamic Average Consensus}\label{sec:abg_dac}\label{sec:scheduling}

We present AC-DC for \myprob. 
Standard DC tracks moving averages under given interaction graphs and communication laws~\cite{kia2015dynamic,zhu2010discrete,kia2019tutorial,bastianello2022admm_dac_imperfect}.
AC-DC instead jointly adapts \emph{Who} communicates with whom, \emph{When}, and over \emph{What} parts of the consensus state, using local inputs and neighbor information.

\subsection{Dynamic Average Consensus (DC) Backbone}\label{subsec:dynamic_input_tracking}
For stream $\mu\in\{c,J\}$, let $y_{i,c}=c_i$ and $y_{i,J}=J_i$, and let $K_\mu$ be the stream dimension. Robot $i$ maintains a consensus state $z_{i,\mu}\in\mathbb R^{K_\mu}$ initialized at its local input.

The local inputs change as the robots move and sense. Let $y_{i,\mu}^{\mathrm{old}}$ be the most recent input already incorporated into $z_{i,\mu}$. At each control update, standard dynamic average consensus adds the local input change~\cite{zhu2010discrete,kia2019tutorial},
\begin{equation}
\begin{aligned}
\Delta y_{i,\mu}&=y_{i,\mu}^{\mathrm{new}}-y_{i,\mu}^{\mathrm{old}},\\
z_{i,\mu}&\leftarrow z_{i,\mu}+\Delta y_{i,\mu},\qquad
y_{i,\mu}^{\mathrm{old}}\leftarrow y_{i,\mu}^{\mathrm{new}}.
\end{aligned}
\label{eq:abg_innovation}
\end{equation}
Communication decisions use this updated consensus state.


To reduce disagreement while preserving the pair sum, we use the standard pairwise gossip update~\cite{boyd2006randomized}.
If robots $i$ and $j$ successfully exchange the entries indexed by $\mathcal L\subseteq\{1,\ldots,K_\mu\}$, they update
\begin{equation}
\begin{aligned}
z_{i,\mu,\mathcal L}^{+}&=(1-\gamma)z_{i,\mu,\mathcal L}^{-}+\gamma z_{j,\mu,\mathcal L}^{-},\\
z_{j,\mu,\mathcal L}^{+}&=\gamma z_{i,\mu,\mathcal L}^{-}+(1-\gamma)z_{j,\mu,\mathcal L}^{-},
\end{aligned}
\qquad 0<\gamma<1.
\label{eq:pairwise_gossip}
\end{equation}
Here $-$ and $+$ denote the states before and after this \emph{gossip update}. This update preserves the pair sum and contracts the delivered-coordinate disagreement according to
\begin{equation}
\begin{aligned}
&\frac12\left\|z_{i,\mu,\mathcal L}^{-}-z_{j,\mu,\mathcal L}^{-}\right\|_2^2
-\frac12\left\|z_{i,\mu,\mathcal L}^{+}-z_{j,\mu,\mathcal L}^{+}\right\|_2^2\\
&\hspace{14mm}=2\gamma(1-\gamma)
\left\|z_{i,\mu,\mathcal L}^{-}-z_{j,\mu,\mathcal L}^{-}\right\|_2^2.
\end{aligned}
\label{eq:pair_disagreement_contraction}
\end{equation}
Together with the local input change update in \Cref{eq:abg_innovation} and initialization $z_{i,\mu}=y_{i,\mu}$, these pairwise-gossip properties preserve the moving network-average identity
$N^{-1}\sum_i z_{i,\mu}=N^{-1}\sum_i y_{i,\mu}$. Tracking accuracy still depends on which pair services are successfully delivered.

\Cref{alg:dac_overview} summarizes this backbone.

\begin{algorithm}[!ht]
\caption{Pairwise DC}\label{alg:dac_overview}
\footnotesize
\begin{algorithmic}[1]
\State \textbf{Input:}
a communication schedule specifying pair services $(i,j,\mu)$ and their transmission start times
\label{algline:dc_schedule}
\State Initialize $z_{i,\mu}\leftarrow y_{i,\mu}$ and $y_{i,\mu}^{\mathrm{old}}\leftarrow y_{i,\mu}$ for all $i$ and $\mu\in\{c,J\}$
\For{each control interval}
    \State Add $\Delta y_{i,\mu}$ to each consensus state and update $y_{i,\mu}^{\mathrm{old}}$ using \Cref{eq:abg_innovation}\label{algline:dc_input}
    \For{each pair service $(i,j,\mu)$ scheduled in this control interval}
    \label{algline:dc_pair}
        \State Exchange the complete stream $\mathcal L=\{1,\ldots,K_\mu\}$ between $i$ and $j$, beginning at the scheduled start time
        \label{algline:dc_exchange}
        \If{both directed state payloads are successfully delivered}
            \State Apply \Cref{eq:pairwise_gossip} on coordinates $\mathcal L$ \label{algline:dc_average}
        \EndIf
    \EndFor
\EndFor
\end{algorithmic}
\end{algorithm}

AC-DC replaces \Cref{alg:dac_overview}'s input schedule (Lines~\ref{algline:dc_schedule}, \ref{algline:dc_pair}--\ref{algline:dc_exchange}) with \Cref{alg:abg_step}. Each robot supplies local estimates $\hat c_i\triangleq z_{i,c}$ and $\hat J_i\triangleq z_{i,J}$ to the fixed ergodic controller, which evaluates \Cref{eq:ergodic_controller_interface} with $(\bar c,\bar J)$ replaced by $(\hat c_i,\hat J_i)$.

\subsection{Adaptive Communication (AC) Policy}\label{subsec:adaptive_communication}
AC-DC avoids repeatedly sending an entire consensus state by partitioning each stream into candidate \emph{state blocks}. 
Before block exchange, we initialize all neighbor blocks through a complete-stream first contact. 
We call subsequent bidirectional state exchanges \emph{ordinary services}.
For each recently heard neighbor, a robot stores the latest successfully delivered block and receives \emph{drift summaries} describing how the neighbor's current block has changed since it was last shared without retransmitting the block. Together they yield a \emph{predicted disagreement}, from which a \emph{candidate score} estimates the value of refreshing a block--neighbor pair. 
These quantities support the three adaptive decisions: \emph{Who} communicates with whom, \emph{When}, and over \emph{What} parts of the consensus state. Scheduling then resolves simultaneous requests, and only successful delivery commits the pairwise DC update.

Because the two DC streams $\{c,J\}$ share one radio timeline, a cross-stream airtime allocator activates one stream $\mu^\star\in\{c,J\}$ per communication frame. For each stream, robot $i$ maintains a directed HELLO cache $\mathcal H_{i,\mu}$ of recently heard senders, expiring after $\tau_H$
\Cref{alg:abg_step} summarizes the communication frame.

\begin{algorithm}[!ht]
\caption{AC-DC adaptive communication policy}\label{alg:abg_step}
\footnotesize
\begin{algorithmic}[1]
\State \textbf{Input:} active stream $\mu^\star$ selected by the cross-stream airtime allocator
\State Update $\mathcal H_{i,\mu^\star}$ from delivered HELLOs; expire entries after $\tau_H$ (\Cref{subsec:adaptive_communication})
\State Attempt complete-stream first contact for uninitialized pairs in either active-stream HELLO cache (\Cref{subsec:local_scheduling})
\State Update stored neighbor blocks and drift summaries from delivered messages (\Cref{subsec:local_scheduling}; \Cref{eq:directional_summary})
\For{each robot $i$ with an initialized current HELLO neighbor}
    \State Compute predicted disagreements and candidate scores using \Cref{eq:directional_discrepancy,eq:directional_utility}
    \State \textbf{\emph{What}:} select $b_i^\star$ and $\mathcal L_i^{\mathrm{tx}}$ using \Cref{eq:block_selection,eq:regime_message_coordinates}; set $\mathcal N_i^{\mathrm{sel}}\leftarrow\mathcal N_{i,\mu^\star,b_i^\star}^{\mathrm{mem}}$
    \State \textbf{\emph{When}:} form $\mathcal N_i^\star$ using \Cref{eq:when_trigger_sets,eq:when_candidates}
    \If{$\mathcal N_i^\star\neq\emptyset$}
        \State \textbf{\emph{Who}:} select the max-score $j_i^\star\in\mathcal N_i^\star$ and submit one request
    \EndIf
\EndFor
\State \textbf{Schedule:} form robot-disjoint pairs through request/acceptance and reserve TDMA slots (\Cref{subsec:abg_payload_complexity})
\State Attempt reserved exchanges; check delivery using \Cref{eq:range_graph,eq:link_conflict,eq:packet_duration}
\State For ordinary services delivered in both directions, apply \Cref{eq:pairwise_gossip}, refresh stored neighbor blocks, and reset their service ages
\end{algorithmic}
\end{algorithm}

\noindent\textbf{Communication blocks and stored neighbor blocks.}\label{subsec:local_scheduling}
The block partition defines the candidate \emph{What} choices and, in selected-block mode, the coordinates carried by an ordinary service. For nominal block length $K_\mu^{\mathrm{blk}}$,
\begin{equation}
\begin{aligned}
\mathcal B_\mu
&=\left\{0,\ldots,\left\lceil\frac{K_\mu}{K_\mu^{\mathrm{blk}}}\right\rceil-1\right\},\\
\mathcal L_{\mu,b}
&=\{bK_\mu^{\mathrm{blk}}+1,\ldots,
\min(K_\mu,(b+1)K_\mu^{\mathrm{blk}})\}.
\end{aligned}
\label{eq:block_partition}
\end{equation}
For a stream vector $v_{i,\mu}$, write $v_{i,\mu,b}=v_{i,\mu}[\mathcal L_{\mu,b}]$.

Write $\widetilde z_{j\to i,\mu,b}$ for robot $i$'s stored neighbor block from $j$.
Let $\mathcal N_{i,\mu,b}^{\mathrm{mem}}$ contain current HELLO-cache neighbors $j\in\mathcal H_{i,\mu}$ for which robot $i$ has initialized the stored neighbor block for block $b$. First contact initializes stored neighbor blocks in both directions without gossip; they persist through HELLO expiration but are ineligible for ordinary service until the sender is heard again. Successful ordinary services refresh stored neighbor blocks and reset their service ages. 

\noindent\textbf{Predicting neighbor disagreement.}\label{subsec:directional_utility}
As a neighbor's consensus state changes, its last successfully delivered block may become outdated. To decide whether block exchange is worthwhile, we send drift summaries to predict disagreement with the neighbor before retransmitting the block.

For the directed neighbor pair $j\to i$, let $z_{j\to i,\mu,b}^{\mathrm{snap}}$ be sender $j$'s block at their last successful ordinary service, and let $z_{j,\mu,b}^{\mathrm{frm}}$ be that block's value at the current frame's start. 
We summarize this change by its magnitude $\delta_{j\to i,b}$ and the sign $\sigma_{j\to i,b}$ of its squared-norm change, setting the sign to zero when the squared-norm change has magnitude at most $\epsilon_\mu$:
\begin{equation}
\begin{aligned}
\delta_{j\to i,b}
&=\left\|z_{j,\mu,b}^{\mathrm{frm}}-z_{j\to i,\mu,b}^{\mathrm{snap}}\right\|_2,\\
\Delta_{j\to i,b}^{\mathrm{rad}}
&=\left\|z_{j,\mu,b}^{\mathrm{frm}}\right\|_2^2
-\left\|z_{j\to i,\mu,b}^{\mathrm{snap}}\right\|_2^2,\\
\sigma_{j\to i,b}
&=\begin{cases}
0,&|\Delta_{j\to i,b}^{\mathrm{rad}}|\le\epsilon_\mu,\\
\operatorname{sgn}(\Delta_{j\to i,b}^{\mathrm{rad}}),&\text{otherwise}.
\end{cases}
\end{aligned}
\label{eq:directional_summary}
\end{equation}
We transmit $(\delta_{j\to i,b},\sigma_{j\to i,b})$ so the receiver can update predicted disagreement without block coordinates; $\Delta_{j\to i,b}^{\mathrm{rad}}$ is computed locally.
Using the latest delivered summary $(\widetilde\delta_{j\to i,b},\widetilde\sigma_{j\to i,b})$, robot $i$ computes the stored-block discrepancy and signed norm-drift adjustment:
\[
\begin{aligned}
d_{i\to j,b}^{\mathrm{mem}}
&=\|z_{i,\mu,b}-\widetilde z_{j\to i,\mu,b}\|_2,\\
\Delta_{i\to j,b}^{\mathrm{gap}}
&=\|z_{i,\mu,b}\|_2^2-\|\widetilde z_{j\to i,\mu,b}\|_2^2,\\
a_{i\to j,b}
&=\widetilde\sigma_{j\to i,b}\,
\operatorname{sgn}(\Delta_{i\to j,b}^{\mathrm{gap}}).
\end{aligned}
\]
The resulting predicted disagreement is
\begin{equation}
\widehat d_{i\to j,b}
=\Big[d_{i\to j,b}^{\mathrm{mem}}
+\beta_\mu a_{i\to j,b}\widetilde\delta_{j\to i,b}
-\epsilon_\mu\Big]_+.
\label{eq:directional_discrepancy}
\end{equation}
Here $\beta_\mu$ scales the drift-summary adjustment, and $\epsilon_\mu$ suppresses small predicted disagreements.
This is a local scoring proxy; the actual DC correction uses block values exchanged in a successful ordinary service.

\noindent\textbf{Scoring a candidate exchange.}
We score candidate exchanges by balancing predicted reductions in consensus disagreement against block size.
Substituting the predicted disagreement $\widehat d_{i\to j,b}$ from \Cref{eq:directional_discrepancy} into the decrease in \Cref{eq:pair_disagreement_contraction} gives $2\gamma(1-\gamma)\widehat d_{i\to j,b}^{\,2}$, which we can evaluate before receiving the neighbor's current block.

For the sensing-information stream ($\mu=J$), we favor exchanges involving blocks with a larger share of robot $i$'s estimated sensing information. We compute this share, $r_{i,b}^{J}$, from the positive entries of $z_{i,J}$ to keep the weight nonnegative under signed DC input updates.
We divide the sum of positive consensus coordinates in the block by the corresponding stream total plus $\varepsilon_{\mathrm{mass}}>0$, setting $r_{i,b}^{J}=0$ when that total is at most $\varepsilon_{\mathrm{mass}}$.
This weighting uses positive parts without altering the consensus state.
With stream scale $s_\mu>0$, information weight $\alpha_J$, and ranking-cost constants $c_0,c_1>0$, we define the candidate score
\begin{equation}
U_{i\to j,b}
=s_\mu\,
\frac{2\gamma(1-\gamma)
\left(1+\mathbf 1_{\{\mu=J\}}\alpha_J r_{i,b}^{J}\right)
\widehat d_{i\to j,b}^{\,2}}
{c_0+c_1|\mathcal L_{\mu,b}|}.
\label{eq:directional_utility}
\end{equation}
The ranking cost grows with the block's coordinate count, $|\mathcal L_{\mu,b}|$; \Cref{tab:protocol_accounting} accounts for full protocol traffic.
We use $U_{i\to j,b}$ to select neighbors and trigger communication in both streams, and as a term in the trajectory block candidate score.

\noindent\textbf{\emph{What} to share.}
\label{subsec:block_priorities}\label{subsec:message_size}
For the active stream---trajectory ($\mu^\star=c$) or sensing information ($\mu^\star=J$)---we choose the block index in $\mathcal B_{\mu^\star}$ with the largest \emph{block candidate score} $S_{i,b}^{\mu^\star}$:
\begin{equation}
b_i^\star\in\argmax_{b\in\mathcal B_{\mu^\star}}S_{i,b}^{\mu^\star}.
\label{eq:block_selection}
\end{equation}

\emph{Trajectory blocks ($c$).}
To weight trajectory changes consistently with the motion-coordination objective, we use the ergodic weights from \Cref{eq:ergodic_mismatch}. For a block change or disagreement vector $v_b$, the weighted sum of squared coordinates is
$\|v_b\|_{\Lambda,b}^2\triangleq v_b^\top\Lambda_bv_b$,
where $\Lambda_b$ is the diagonal matrix of these weights for block $b$.
We use it in the first three terms and the normalized candidate score in the fourth:
\begingroup
\setlength{\multlinegap}{0pt}
\begin{multline}
S_{i,b}^{c}=
\alpha_I^c\|\Delta y_{i,c,b}\|_{\Lambda,b}^2
+\alpha_M^c\max_{j\in\mathcal N_{i,c,b}^{\mathrm{mem}}}
\|z_{i,c,b}-\widetilde z_{j\to i,c,b}\|_{\Lambda,b}^2\\
+\alpha_Q^c\|z_{i,c,b}-z_{i,c,b}^{\mathrm{last}}\|_{\Lambda,b}^2
+\alpha_U^c\bar U_{i,b}.
\label{eq:c_block_score}
\end{multline}
\endgroup
The first term prioritizes recent changes in the local trajectory input, with $\Delta y_{i,c,b}$ given by \Cref{eq:abg_innovation}.
The second prioritizes the largest disagreement between the current trajectory consensus block $z_{i,c,b}$ and a stored neighbor block $\widetilde z_{j\to i,c,b}$. 
The third term prioritizes local consensus changes since the block's last successful ordinary service. Here $z_{i,c,b}^{\mathrm{last}}$ is the local block saved at that service, initialized to the initial block.
The fourth term favors high-scoring available exchanges: $\bar U_{i,b}$ is the largest $U_{i\to j,b}$ over eligible neighbors, normalized by the largest of these maxima across trajectory blocks.
Empty-neighbor maxima and zero-denominator ratios are zero.
We balance the four terms with $\alpha_I^c,\alpha_M^c,\alpha_Q^c,\alpha_U^c$.

\emph{Sensing-information blocks ($J$).}
We combine ranks of local input change and disagreement with stored neighbor blocks to balance these quantities despite their different scales.

Let $\Delta J_i^{\mathrm{sel}}$ be the change in $J_i$ since the preceding control update, held fixed throughout the current control interval and initialized to zero.
We denote the sum of its squared entries within block $b$ by $A_{i,b}^{J}$.
We denote by $D_{i,b}^{J}$ the largest sum of squared coordinate differences between robot $i$'s current sensing-information consensus block and an eligible stored neighbor block. We set $D_{i,b}^{J}=0$ if none is available.

Let $\operatorname{rank}_i(\cdot)\in[0,1]$ denote the normalized ascending rank across blocks.
With usable stored neighbor blocks, we define
\begin{equation}
\begin{aligned}
S_{i,b}^{J}
={}&\left(\max_{b'}A_{i,b'}^{J}\right)
\Big[(1-\lambda_J)\operatorname{rank}_i(A_{i,b}^{J})\\
&\qquad+\lambda_J\operatorname{rank}_i(D_{i,b}^{J})\Big].
\end{aligned}
\label{eq:J_block_score}
\end{equation}
We use $\lambda_J$ to balance the contribution of disagreement relative to local input change.
The common factor $\max_{b'}A_{i,b'}^{J}$ preserves block ordering when positive and gives zero scores when all local input changes vanish. Without usable stored neighbor blocks, we set $S_{i,b}^{J}=A_{i,b}^{J}$.
We resolve rank and block-selection ties deterministically, with rank zero for a single-block stream.

In both message modes, we use the selected block to trigger communication and choose a neighbor. The transmitted coordinate set $\mathcal L_i^{\mathrm{tx}}$ contains either all $K_{\mu^\star}$ coordinates of the active consensus stream or only those in the selected block:
\begin{equation}
\mathcal L_i^{\mathrm{tx}}=\begin{cases}
\{1,\ldots,K_{\mu^\star}\},&\text{complete-stream mode},\\
\mathcal L_{\mu^\star,b_i^\star},&\text{selected-block mode}.
\end{cases}
\label{eq:regime_message_coordinates}
\end{equation}

\noindent\textbf{\emph{When} to communicate.}\label{subsec:when_to_communicate}
To avoid low-value requests while allowing old blocks to be refreshed, we use a candidate-score threshold $\tau_{\mu^\star}^{U}$ and a service-age threshold $H_{\mu^\star}$. Let $h_{j\to i,b}^{\mathrm{svc}}$ count active-stream frames since $j\to i$ last served block $b$, and let $\mathcal N_i^{\mathrm{sel}}\triangleq\mathcal N_{i,\mu^\star,b_i^\star}^{\mathrm{mem}}$ contain the initialized current HELLO neighbors for the selected block. Define
\begin{equation}
\begin{aligned}
\mathcal N_i^{U}
&=\{j\in\mathcal N_i^{\mathrm{sel}}:
U_{i\to j,b_i^\star}\ge\tau_{\mu^\star}^{U}\},\\
\mathcal N_i^{H}
&=\{j\in\mathcal N_i^{\mathrm{sel}}:
h_{j\to i,b_i^\star}^{\mathrm{svc}}\ge H_{\mu^\star}\}.
\end{aligned}
\label{eq:when_trigger_sets}
\end{equation}
We first consider neighbors meeting the candidate-score threshold; if none qualifies, we use service age to request a refresh:
\begin{equation}
\mathcal N_i^\star=\begin{cases}
\mathcal N_i^{U},&\mathcal N_i^{U}\neq\emptyset,\\
\mathcal N_i^{H},&\text{otherwise}.
\end{cases}
\label{eq:when_candidates}
\end{equation}
A refresh still requires successful delivery.

\noindent\textbf{\emph{Who} to contact.}\label{subsec:with_whom}
If $\mathcal N_i^\star$ is nonempty, robot $i$ requests the qualified neighbor with the largest candidate score,
\begin{equation}
j_i^\star\in\argmax_{j\in\mathcal N_i^\star}U_{i\to j,b_i^\star}.
\label{eq:when_who}
\end{equation}
If $\mathcal N_i^\star$ is empty, robot $i$ submits no ordinary-service request.

\noindent\textbf{Scheduling and delivery.}\label{subsec:abg_payload_complexity} 
A requested exchange proceeds through request/acceptance and a locally feasible TDMA reservation \cite{gandham2008link_scheduling, ergen2010tdma}. We attempt the reserved exchanges under the range, interference, and packet-duration conditions in \Cref{eq:range_graph,eq:link_conflict,eq:packet_duration}.
To preserve the pair sum, we apply \Cref{eq:pairwise_gossip} and refresh stored neighbor blocks only after successful delivery in both directions.
Failed attempts consume modeled airtime and bytes but leave the consensus states unchanged.
First contact initializes stored neighbor blocks without gossip, while drift summaries update only the stored summary values. 
We use each slot’s initial consensus state; later slots reflect earlier successful updates.

\subsection{Communication Complexity}\label{subsec:communication_complexity}
For either DC stream $\mu\in\{c,J\}$, we next characterize the modeled communication generated in one frame. 

\begin{proposition}[Communication Complexity]\label{prop:communication_complexity}
Consider the attempted-on-air modeled bits generated by stream $\mu$ in communication frame $f$. 
Let $d_{\max}$ bound the number of active one-hop neighbor records at any robot, let $K_\mu^{\mathrm{blk}}$ be the maximum block length, and let $n_\mu^{\mathrm{fc}}(f)\le\lfloor N/2\rfloor$ be the number of first-contact pair exchanges in frame $f$. 
For the one-block selected mode used here and fixed summary/bundle limits, define the scalable-identifier metadata scaling term
\[
M_\mu\triangleq(1+d_{\max})\big(\log N+\log|\mathcal B_\mu|\big).
\]
With $O(1)$ bits per fixed-width state scalar, complete-stream traffic is $O(N(K_\mu+M_\mu))$, while selected-block traffic is
\begin{equation*}
O\!\left(
N(K_\mu^{\mathrm{blk}}+M_\mu)
+n_\mu^{\mathrm{fc}}(f)K_\mu
\right).
\end{equation*}
\end{proposition}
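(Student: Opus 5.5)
The bound follows from a per-robot, per-message accounting. We would list every message type a robot can emit for stream $\mu$ in frame $f$, bound how many times each is sent and its bit length, and then sum over the $N$ robots. The message types of \Cref{alg:abg_step} are:
\begin{itemize}
\item HELLO beacons;
\item request/acceptance control messages for TDMA reservation;
\item drift-summary bundles $(\delta_{j\to i,b},\sigma_{j\to i,b})$ from \Cref{eq:directional_summary};
\item first-contact complete-stream payloads;
\item ordinary-service state payloads carrying the coordinates $\mathcal L_i^{\mathrm{tx}}$ from \Cref{eq:regime_message_coordinates}.
\end{itemize}
Counting transmitted messages covers failed deliveries too, because we count \emph{attempted} on-air bits and \Cref{eq:packet_duration} charges airtime regardless of success.

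Next we bound metadata per robot. A robot identifier costs $O(\log N)$ bits and a block index costs $O(\log|\mathcal B_\mu|)$ bits. A HELLO and a request/accept carry $O(1)$ identifiers. Because summary and bundle limits are fixed, a summary bundle addresses at most $d_{\max}$ neighbor records. Each record carries a neighbor identifier, a block index, and $O(1)$ fixed-width scalars $(\delta,\sigma)$. Each robot's metadata in a frame is therefore $O((1+d_{\max})(\log N+\log|\mathcal B_\mu|))=O(M_\mu)$. The scalar part of the summaries, $O(d_{\max})$ bits, is also absorbed into $M_\mu$. Summed over robots, metadata contributes $O(NM_\mu)$.

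Then we bound state payloads. By Line ``Schedule'' of \Cref{alg:abg_step}, ordinary services form robot-disjoint pairs, and each robot submits at most one request by \Cref{eq:when_who}. So at most $\lfloor N/2\rfloor$ ordinary services occur, each sending two directed payloads. In complete-stream mode each payload has $K_\mu$ scalars, giving $O(NK_\mu)$. In selected-block mode each payload has $|\mathcal L_{\mu,b}|\le K_\mu^{\mathrm{blk}}$ scalars by \Cref{eq:block_partition}, giving $O(NK_\mu^{\mathrm{blk}})$. Each first-contact exchange sends two full streams, adding $O(n_\mu^{\mathrm{fc}}(f)K_\mu)$. In complete-stream mode this term is already dominated by $O(NK_\mu)$ because $n_\mu^{\mathrm{fc}}(f)\le\lfloor N/2\rfloor$. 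Adding the metadata and payload terms yields both stated bounds.

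The main obstacle is justifying that per-robot metadata does not scale with $N$ beyond the identifier length. This requires showing that the protocol's summary bundles are capped at $d_{\max}$ records and that a robot cannot be forced to send multiple request, accept, or summary rounds in one frame. The plan is to invoke the modeling assumptions directly: the fixed summary/bundle limits, one request per robot, and at most one acceptance per robot under robot-disjoint pairing. If acceptances can be re-sent after conflicts, we would bound the number of retries by the fixed bundle limit, which changes only constants.
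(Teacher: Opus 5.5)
Your accounting is essentially the paper's proof. Aggregating HELLO and summary records per sender, with at most one request and one accept per robot, gives $O(NM_\mu)$ metadata, and robot-disjoint pairing caps state-carrying exchanges at $\lfloor N/2\rfloor$, so ordinary exchanges contribute $O(NK_\mu)$ or $O(NK_\mu^{\mathrm{blk}})$ and first contacts add $O(n_\mu^{\mathrm{fc}}(f)K_\mu)$. The paper treats first-contact and ordinary exchanges as jointly robot-disjoint, whereas you bound them separately using $n_\mu^{\mathrm{fc}}(f)\le\lfloor N/2\rfloor$; that is equally valid.

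One small correction: a HELLO does not carry only $O(1)$ identifiers. Its modeled size $24+4h_i+8s_i$ includes per-neighbor records, but there are at most $d_{\max}$ of these per sender, so HELLO traffic still fits within $M_\mu$.
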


\begin{proof}
HELLO and summary records are aggregated per sender. 
Each robot issues at most one request and accept, while reservation and data exchanges form robot-disjoint matchings, giving $O(NM_\mu)$ metadata bits per frame. 
First-contact and ordinary state exchanges are jointly robot-disjoint, so at most $\lfloor N/2\rfloor$ state-carrying pair exchanges occur in a frame.
Complete-stream pair exchanges therefore contribute $O(NK_\mu)$ state bits. In selected-block mode, ordinary pair exchanges contribute $O(NK_\mu^{\mathrm{blk}})$ state bits, while the $n_\mu^{\mathrm{fc}}(f)$ first-contact exchanges contribute $O(n_\mu^{\mathrm{fc}}(f)K_\mu)$. 
Adding metadata gives the stated bounds.
\end{proof}


\begin{table}[!t]
\centering
\caption{AC-DC-specific parameters used in all settings.}
\label{tab:acdc_parameters}
\scriptsize
\setlength{\tabcolsep}{3pt}
\begin{tabular}{@{}ll@{}}
\toprule 
Quantity & Value \\
\midrule
$\gamma$ & $0.5$ \\
$K_c^{\mathrm{blk}}$ / $K_J^{\mathrm{blk}}$ & $40$ / $20$ \\
$(\beta_c,\beta_J)$ & $(0.75,0.1875)$ \\
$(\epsilon_c,\epsilon_J)$ & $(5\!\times\!10^{-5},2\!\times\!10^{-4})$ \\
$(s_c,s_J)$ & $(4,0.5)$ \\
$\alpha_J$ / $\varepsilon_{\mathrm{mass}}$ & $0.25$ / $10^{-12}$ \\
$(c_0,c_1)$ & $(12,4)$ \\
$(\tau_c^U,\tau_J^U)$ & $(10^{-7},2\!\times\!10^{-4})$ \\
$(H_c,H_J)$ & $(256,1024)$ \\
HELLO timeout $\tau_H$ [frames] & $3$ \\
$(\alpha_I^c,\alpha_M^c,\alpha_Q^c,\alpha_U^c)$ & $(0.7,0.2,0.1,0.1)$ \\
$\lambda_J$ & $0.25$ \\
\bottomrule
\end{tabular}
\end{table}

\begin{table}[!t]
\centering
\caption{Modeled AC-DC messages, in bytes (8 bits per byte).}
\label{tab:protocol_accounting}
\scriptsize
\setlength{\tabcolsep}{3pt}
\begin{tabular}{@{}lc@{}}
\toprule
Message or exchange & Modeled bytes per attempt \\
\midrule
HELLO & $24+4h_i+8s_i$ \\
First contact & $8K_\mu$ \\
Summary & $12+12e_i$ \\
Request & $32+8(q-1)$ \\
Accept & $16+4(q-1)$ \\
Reservation & $24+4(q-1)$ \\
Ordinary data & $8\sum_{\ell=1}^{q}k_\ell+16+12q\mathbf 1_{\{q>1\}}$ \\
\bottomrule
\end{tabular}
\par\smallskip
\begin{minipage}{\columnwidth}\scriptsize
$K_\mu$ is the stream dimension. The counts $h_i$ and $s_i$ are the numbers of 4-byte and 8-byte HELLO records, respectively; $e_i$ counts records in a separate summary message. A bundle contains $q$ blocks, and $k_\ell$ is the number of state coordinates in block $\ell$. Each state coordinate occupies 4 modeled bytes per direction, so a bidirectional full-stream exchange costs $8K_\mu$ bytes. The remaining terms account for metadata.
\end{minipage}
\end{table}

Because exactly one stream is active per communication frame, this is also the total per-frame modeled AC-DC communication bound. Mission-level traffic is obtained by summing these frame costs over the $c$- and $J$-stream frames.


In frames without first contact, our packet models give selected-block AC-DC the smallest communication bound when $K_\mu^{\mathrm{blk}}+M_\mu\ll K_\mu$. The bounds are $O(N(K_\mu^{\mathrm{blk}}+M_\mu))$ for AC-DC, $O(Nd_{\max}(K_\mu+\log N))$ for ADMM-DAC~\cite{bastianello2022admm_dac_imperfect}, which sends a full state to each active neighbor, and $O(N(K_\mu+\log N))$ for PP-ACDC~\cite{makridis2025average}, which broadcasts a full stream per active sender. We reduce the state payload from $K_\mu$ to $K_\mu^{\mathrm{blk}}$ entries and avoid ADMM-DAC's per-neighbor state replication, removing the $d_{\max}$ factor from our state term. PP-ACDC already avoids this replication, so our advantage over it comes from smaller blocks whose savings outweigh the additional metadata cost $M_\mu$ under the stated condition. Byte totals depend on packet sizes and attempted exchanges.


We use a full-stream first contact to initialize all stored neighbor blocks once per robot pair and stream. After this exchange succeeds, we retain the blocks for subsequent services, even through HELLO-cache expiration, so the initialization cost is not repeated. The $O(n_\mu^{\mathrm{fc}}(f)K_\mu)$ term in \Cref{prop:communication_complexity} therefore accounts for new initializations and retries after failed delivery. Although this cost can dominate a frame containing many first-contact attempts, subsequent services between initialized pairs incur only the selected-block and metadata costs above.


We incur the metadata cost to prioritize high-scoring exchanges and overdue refreshes, aiming to use communication where updating a block is most useful. In \Cref{tab:main_results}, see setting D1, our complete-stream mode achieves lower mean normalized covariance-trace AUC and attempted modeled payload than both evaluated full-stream baselines, demonstrating gains beyond selected-block transmission.

\section{Experiments}\label{sec:experiments}
We evaluate AC-DC in closed-loop dynamic-priority ergodic search against ADMM-DAC, PP-ACDC, and an ideal centralized baseline. The main study uses twelve environment--density settings up to $N=80$ with 20 paired Monte Carlo trials per setting, followed by fixed-area tests at $N=100, 120$. 
AC-DC has the lowest mean normalized covariance-trace AUC and attempted modeled payload among the compared decentralized methods. 
In the fixed-area scalability test, at $N=120$, AC-DC uses $19.3$~MB versus $19.2$~MB for the ideal centralized baseline while remaining peer-to-peer. 

\subsection{Experimental Setup}\label{subsec:experiment_setup}

\noindent\textbf{Scenarios and task.}
We use the reduced planar-quadrotor model based on~\cite{wu2016safety} and evaluate $300$, $450$, and $600$~m square environments with $N\in\{5,10,15,20\}$, $\{11,22,33,44\}$, and $\{20,40,60,80\}$ robots, respectively, defining approximately matched density regimes D1--D4. The stream dimensions are $K_c=K_J\in\{400,625,841\}$. Each paired trial uses common seeds across methods over $N_T=5000$ control intervals at $\Delta t=\SI{0.05}{s}$ ($T=\SI{250}{s}$), with one sensing update per interval. Initial positions are sampled uniformly with a $15$~m boundary margin and reused across methods.

\begin{figure}[!ht]
\centering
\includegraphics[width=0.82\columnwidth]{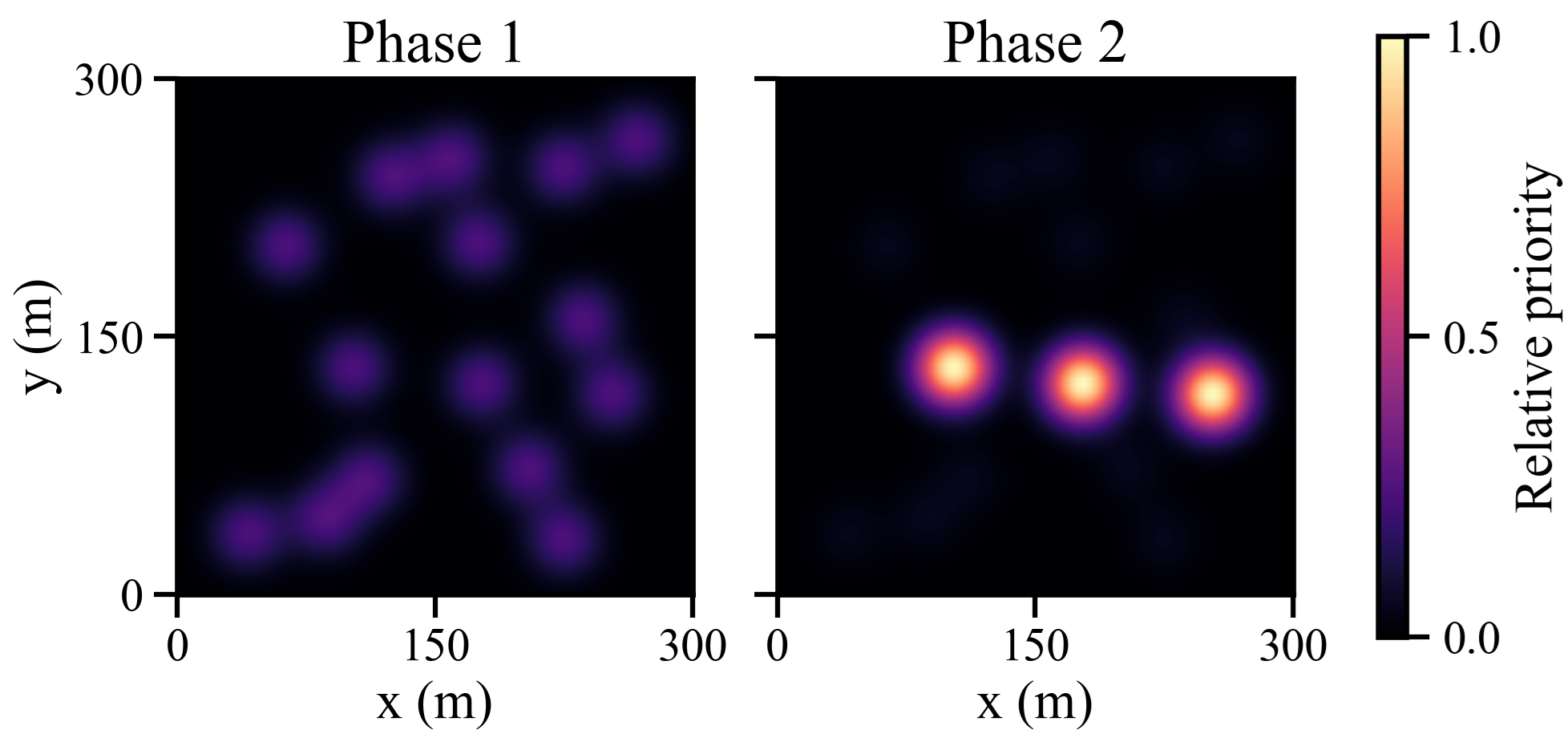}
\caption{$300\times300$~m dynamic-priority setup. The shared mission-priority map is multimodal in Phase~1 and switches at $t_c=\SI{100}{s}$ to the concentrated Phase~2 priority. The larger workspaces use the same two-phase task structure.}
\label{fig:dynamic_priority_setup}
\end{figure}


\begin{figure*}[!t]
\centering
\includegraphics[width=0.83\textwidth]{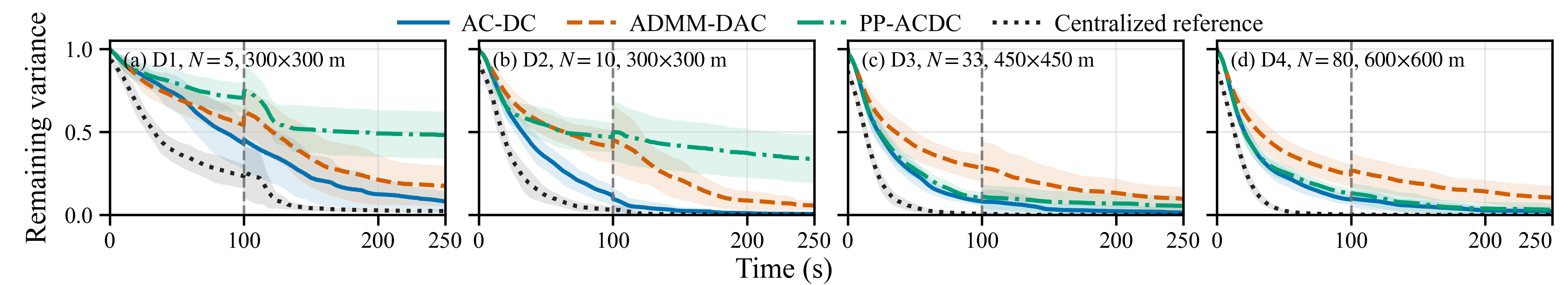}
\caption{Representative normalized covariance-trace ratio trajectories for D1--D4, 20 paired Monte Carlo trials (curves and shaded bands indicate the mean and $\pm$ sample standard deviation, respectively). The dashed line marks $t_c=\SI{100}{s}$; \Cref{tab:main_results} reports the 20-trial results for all settings.}
\label{fig:remaining_variance_all}
\end{figure*}

\begin{table*}[!t]
\centering
\caption{Closed-loop normalized covariance-trace AUC and attempted modeled protocol payload over
$250$~s (mean $\pm$ sample standard deviation, 20 paired Monte Carlo trials per setting). Lower is better; bold marks the best decentralized entry. Centralized denotes ideal centralized baseline, which uses ideal team information with separate robot-state uplink and control-input downlink accounting.}
\label{tab:main_results}
\scriptsize
\setlength{\tabcolsep}{1.65pt}
\begin{tabular}{@{}lcccccccc@{}}
\toprule
& \multicolumn{4}{c}{Normalized covariance-trace AUC}
& \multicolumn{4}{c}{Attempted modeled payload [MB]} \\
\cmidrule(lr){2-5}\cmidrule(l){6-9}
Setting
& AC-DC & ADMM-DAC & PP-ACDC & Centralized
& AC-DC & ADMM-DAC & PP-ACDC & Centralized \\
\midrule
300/N5 (D1)
& $\mathbf{0.415\pm0.110}$
& $0.489\pm0.105$
& $0.635\pm0.094$
& $0.218\pm0.034$
& $\mathbf{5.1\pm0.4}$
& $15.9\pm10.0$
& $27.2\pm15.8$
& $0.8\pm0.0$ \\

300/N10 (D2)
& $\mathbf{0.180\pm0.026}$
& $0.344\pm0.060$
& $0.468\pm0.100$
& $0.098\pm0.017$
& $\mathbf{4.0\pm0.1}$
& $31.8\pm14.1$
& $61.4\pm27.8$
& $1.6\pm0.0$ \\

300/N15 (D3)
& $\mathbf{0.136\pm0.031}$
& $0.231\pm0.089$
& $0.277\pm0.128$
& $0.064\pm0.011$
& $\mathbf{6.1\pm0.2}$
& $35.8\pm20.3$
& $90.7\pm40.8$
& $2.4\pm0.0$ \\

300/N20 (D4)
& $\mathbf{0.108\pm0.021}$
& $0.161\pm0.046$
& $0.168\pm0.052$
& $0.047\pm0.006$
& $\mathbf{9.1\pm0.3}$
& $55.3\pm27.1$
& $142.6\pm24.6$
& $3.2\pm0.0$ \\
\midrule

450/N11 (D1)
& $\mathbf{0.450\pm0.056}$
& $0.482\pm0.110$
& $0.592\pm0.088$
& $0.207\pm0.035$
& $\mathbf{7.9\pm0.3}$
& $12.9\pm9.8$
& $42.3\pm32.1$
& $1.8\pm0.0$ \\

450/N22 (D2)
& $\mathbf{0.239\pm0.049}$
& $0.364\pm0.067$
& $0.314\pm0.065$
& $0.093\pm0.017$
& $\mathbf{8.3\pm0.2}$
& $50.4\pm30.5$
& $74.3\pm64.5$
& $3.5\pm0.0$ \\

450/N33 (D3)
& $\mathbf{0.153\pm0.027}$
& $0.288\pm0.072$
& $0.189\pm0.044$
& $0.064\pm0.009$
& $\mathbf{11.2\pm0.6}$
& $66.2\pm46.5$
& $61.4\pm57.0$
& $5.3\pm0.0$ \\

450/N44 (D4)
& $\mathbf{0.126\pm0.031}$
& $0.207\pm0.058$
& $0.128\pm0.031$
& $0.047\pm0.006$
& $\mathbf{11.9\pm0.3}$
& $106.0\pm50.6$
& $100.4\pm60.2$
& $7.0\pm0.0$ \\
\midrule

600/N20 (D1)
& $\mathbf{0.462\pm0.086}$
& $0.503\pm0.111$
& $0.558\pm0.093$
& $0.223\pm0.023$
& $\mathbf{9.5\pm0.6}$
& $11.1\pm18.1$
& $39.7\pm58.4$
& $3.2\pm0.0$ \\

600/N40 (D2)
& $\mathbf{0.386\pm0.063}$
& $0.427\pm0.082$
& $0.408\pm0.081$
& $0.121\pm0.013$
& $\mathbf{11.0\pm0.6}$
& $80.2\pm57.4$
& $211.3\pm107.4$
& $6.4\pm0.0$ \\

600/N60 (D3)
& $\mathbf{0.235\pm0.043}$
& $0.328\pm0.066$
& $0.264\pm0.051$
& $0.080\pm0.007$
& $\mathbf{12.6\pm0.3}$
& $97.9\pm79.7$
& $204.8\pm153.4$
& $9.6\pm0.0$ \\

600/N80 (D4)
& $\mathbf{0.159\pm0.024}$
& $0.283\pm0.061$
& $0.175\pm0.026$
& $0.062\pm0.005$
& $\mathbf{15.1\pm0.7}$
& $120.4\pm73.8$
& $276.8\pm161.2$
& $12.8\pm0.0$ \\
\bottomrule
\end{tabular}
\end{table*}

\begin{table}[!t]
\centering
\caption{Fixed-area scaling beyond 80 robots: modeled payload and normalized covariance-trace AUC (mean $\pm$ sample standard deviation, 20 paired trials). Lower is better; Centralized denotes ideal centralized baseline.}
\label{tab:large_team_scaling}
\scriptsize
\setlength{\tabcolsep}{2.5pt}
\resizebox{\columnwidth}{!}{%
\begin{tabular}{c cc cc}
\toprule
& \multicolumn{2}{c}{Modeled payload [MB]}
& \multicolumn{2}{c}{Normalized covariance-trace AUC} \\
\cmidrule(lr){2-3}
\cmidrule(lr){4-5}
$N$ & AC-DC & Centralized & AC-DC & Centralized \\
\midrule
100 & $16.6\pm0.5$ & $16.0\pm0.0$ & $0.135\pm0.022$ & $0.050\pm0.006$ \\
120 & $19.3\pm0.3$ & $19.2\pm0.0$ & $0.129\pm0.045$ & $0.041\pm0.003$ \\
\bottomrule
\end{tabular}%
}
\end{table}

For the independent-cell specialization in \Cref{eq:sensing_information_model,eq:local_information}, we instantiate the diagonal measurement-information contribution using a finite-range Gaussian spatial sensitivity inspired by the sensor-footprint model in~\cite{ayvali2017ergodic}:
\[
[M_i(p)]_{rr}=\frac{w(r;p)}{\sigma_z^2},
\]
where
\[
w(r;p)=\exp\!\left(-\frac{\|r-p\|_2^2}{2\sigma_s^2}\right)
\mathbf 1_{\{\|r-p\|_2\le r_s\}}.
\]
Here $r_s$, $\sigma_s$, and $\sigma_z^2$ are the sensing radius, spatial-sensitivity width, and measurement-noise variance. We use $r_s=\SI{15}{m}$, $\sigma_s=\SI{5}{m}$, and $\sigma_z^2=0.01$ in all reported experiments.

The shared mission priority switches from $\rho_1$ to $\rho_2$ at $t_c=\SI{100}{s}$ via a one-time command outside the mesh and excluded from mesh traffic. The controller weights posterior variance above $V_{\mathrm{floor}}=0.01$ by priority and normalizes over the workspace, using normalized priority if residual mass vanishes; map-grid quadrature evaluates \Cref{eq:effective_target_coefficients}.

\noindent\textbf{Compared methods and common interface.}
Decentralized methods share two task signals and a controller. The trajectory input uses up to $40$ executed samples and a $30$-step nominal rollout with $\alpha_c=0.7$. Decentralized methods smooth $\hat c_i$ and ergodic mismatch with first-order gain $0.05$ at $20$~Hz; the ideal centralized baseline uses exact team quantities.

All decentralized methods share one radio timeline with $R_c=\SI{100}{m}$, $R_I=\SI{150}{m}$, $R_{\mathrm{bit}}=250~\mathrm{kbps}$, and eight TDMA slots per frame. A common persistent deficit allocator targets a $25/75$ trajectory/sensing-information airtime split, serves the largest normalized eligible deficit, retains signed deficits across frames, and alternates exact ties.

Here, ADMM-DAC uses asynchronous neighbor-specific $q_{ij}^{\mathrm{ADMM}}\in\mathbb R^{K_\mu}$ messages~\cite{bastianello2022admm_dac_imperfect}, PP-ACDC uses native packed all-sender broadcasts~\cite{makridis2025average}, and the ideal centralized baseline operates outside the constrained robot mesh.


\noindent\textbf{AC-DC configuration.}
We use one AC-DC parameter set in \Cref{tab:acdc_parameters} across all settings and trials without retuning. The gossip weight $\gamma=0.5$ gives pairwise averaging in \Cref{eq:pairwise_gossip}. Message granularity is fixed by regime: D1 uses complete-stream ordinary services, while D2--D4 use selected-block services from \Cref{eq:block_selection,eq:regime_message_coordinates}. Block and neighbor ties are resolved deterministically. We chose the remaining parameters empirically based on whether the communication policy operated as intended. Drift summaries are omitted when first-contact or ordinary data are sent.


\noindent\textbf{Communication accounting.}
We count all attempted modeled protocol messages, including failed final delivery checks, over both streams and all frames. 
We calculate AC-DC traffic with the fixed field sizes in \Cref{tab:protocol_accounting}; whereas \Cref{prop:communication_complexity} states scalable-identifier asymptotics.
ADMM-DAC sends a $4K_\mu$-byte unicast per active neighbor, while PP-ACDC sends one packed broadcast of $\left\lceil(42K_\mu+\delta_N)/8\right\rceil$ bytes per active sender, with $\delta_N=\max\{1,\lceil\log_2N\rceil\}$. 

\subsection{Metrics}\label{subsec:metric}
To evaluate information gathering independently of local DC tracking error, we use $I_i$, $\Omega_0$, and $\rho_P$ from \Cref{sec:problem} and pool the uncompressed cell-wise information offline. Under the independent-cell Gaussian map model, let $I_0(r)\triangleq[\Omega_0]_{rr}$; the pooled posterior variance for method $\nu$ is
\begin{equation}
V_{\mathrm{team}}^{(\nu)}(t,r)
=\left(I_0(r)+\sum_{i=1}^{N}I_i^{(\nu)}(t,r)\right)^{-1}.
\label{eq:ideal_team_variance}
\end{equation}
This pooled quantity is never provided to a decentralized controller. It evaluates the task-level uncertainty achieved by the realized closed-loop trajectories, whereas $\mathcal E_{\mathrm{erg}}$ in \Cref{eq:ergodic_mismatch} is the controller's spectral visitation mismatch.

Let $R^{(\nu)}(t,r)\triangleq[V_{\mathrm{team}}^{(\nu)}(t,r)-V_{\mathrm{floor}}]_+$ and $t_1\triangleq\Delta t$. Suppressing $\nu$ when clear, normalize the mission-priority-weighted remaining variance by its initial weighted value:
\begin{equation}
\mathcal W(t)
=\frac{\sum_{r\in\mathcal Q}\rho_P(t,r)R(t,r)\,\Delta A}
{\sum_{r\in\mathcal Q}\rho_P(t_1,r)R(t_1,r)\,\Delta A}.
\label{eq:evaluation_ratio}
\end{equation}
Because the mission-priority weights switch at $t_c$, $\mathcal W(t)$ may jump before the next sensing update. 
We compute the normalized covariance-trace $\mathrm{AUC}=\frac{1}{T-t_1}\int_{t_1}^{T}\mathcal W(t)\,dt$ by trapezoidal integration; lower is better. 
We call this the normalized covariance-trace AUC. Paired gain relative to baseline $\nu$ is $100(\mathrm{AUC}^{(\nu)}-\mathrm{AUC}^{(\mathrm{AC-DC})})/\mathrm{AUC}^{(\nu)}$.

Statistical summaries use 20 paired trials per setting for the main results and 10 for the message-granularity intervention. Across-setting gains average paired gains within each setting before averaging across settings.

\subsection{Results}\label{subsec:overall_results}\label{subsec:priority_response}\label{subsec:d1_results}

\noindent\textbf{Task performance.} \Cref{fig:remaining_variance_all,tab:main_results} summarize the 20-trial results. AC-DC has the lowest mean normalized covariance-trace AUC among the decentralized methods in all twelve settings. Averaging setting-wise paired gains across settings gives $29.5\%$ relative to ADMM-DAC and $24.5\%$ relative to PP-ACDC. The closest mean comparison is a near tie at 450/N44: $0.126$ for AC-DC versus $0.128$ for PP-ACDC. 

\noindent\textbf{Communication.}\label{subsec:baseline_complexity}\label{subsec:communication_usage} AC-DC has the lowest mean attempted modeled payload among the decentralized methods in all settings. ADMM-DAC and PP-ACDC use $5.8\times$ and $11.5\times$ as much payload as AC-DC on average, respectively. At 600/N80, AC-DC uses $15.1$~MB versus $120.4$ and $276.8$~MB, while the ideal centralized baseline uses $12.8$~MB. Thus, AC-DC approaches the traffic of the ideal centralized baseline while retaining peer-to-peer operation.


\noindent\textbf{Large-team fixed-area scaling.} 
To examine communication scaling beyond 80 robots, we test $N=100,120$ in the fixed $600\times600$~m environment with 20 paired trials. 
\Cref{tab:large_team_scaling} shows AC-DC approaching the ideal centralized baseline in modeled payload while remaining peer-to-peer.


\noindent\textbf{Message-granularity intervention.} Complete-stream messages improve AUC in most paired D1 trials under sparse contact, with a setting-dependent payload cost. A separate 10-trial intervention compares complete-stream and selected-block AC-DC. Complete-stream AUC is lower in $9/10$, $9/10$, and $8/10$ paired trials at 300/N5, 450/N11, and 600/N20, with paired gains of $22.8\%\pm20.1\%$, $18.1\%\pm11.2\%$, and $10.7\%\pm9.0\%$; the corresponding payload changes are $+182.8\%\pm32.8\%$, $+59.7\%\pm12.6\%$, and $+0.3\%\pm6.5\%$. 

\section{Conclusion}\label{sec:conclusion}

We presented AC-DC for communication-constrained decentralized ergodic search. AC-DC tracks team visitation for motion coordination and regional measurement information for uncertainty-map updates, while jointly deciding \emph{Who} communicates with whom, \emph{When}, and over \emph{What} parts of the consensus state. Successful services retain the DC backbone's average-preservation and pairwise-contraction properties.

In our comparisons, AC-DC has lower mean normalized covariance-trace AUC and attempted modeled payload than ADMM-DAC and PP-ACDC. In fixed-area tests, AC-DC approaches the ideal centralized baseline in modeled payload.
Future work will infer communication regimes online, adapt state granularity, and establish tracking guarantees under explicit connectivity and delivery assumptions.

\bibliographystyle{IEEEtran}
\bibliography{references}

\end{document}